\documentclass{article}

\usepackage[preprint]{icml2026}
\makeatletter
\renewcommand{\ICML@preprint}{\textit{Submitted to ICML 2026.}}
\makeatother
\usepackage{amsmath}
\usepackage{amssymb}
\usepackage{amsthm}
\usepackage{booktabs}
\usepackage{graphicx}
\usepackage{hyperref}
\usepackage{microtype}
\usepackage[capitalize,noabbrev]{cleveref}
\usepackage{algorithm}
\usepackage{algorithmic}
\providecommand{\RETURN}{\STATE \textbf{return} }

\theoremstyle{plain}
\newtheorem{theorem}{Theorem}[section]
\newtheorem{proposition}[theorem]{Proposition}

\theoremstyle{definition}
\newtheorem{definition}[theorem]{Definition}
\newtheorem{assumption}[theorem]{Assumption}
\theoremstyle{remark}
\newtheorem{remark}[theorem]{Remark}

\newcommand{\loss}{\mathcal{L}}
\newcommand{\data}{\mathcal{D}}

\newcommand{\Ocal}{\mathcal{O}}
\providecommand{\R}{\mathbb{R}}
\newcommand{\E}{\mathbb{E}}
\DeclareMathOperator{\rank}{rank}

\newcommand{\ract}{\textsc{RACT}}

\icmltitlerunning{Why LoRA Resists Label Noise}

\begin{document}

\twocolumn[
\icmltitle{Why LoRA Resists Label Noise: A Theoretical Framework for Noise-Robust Parameter-Efficient Fine-Tuning}

\icmlsetsymbol{equal}{*}

  \begin{icmlauthorlist}
  \icmlauthor{Brady Steele}{gatech}
  \end{icmlauthorlist}

  \icmlaffiliation{gatech}{Georgia Institute of Technology, Atlanta, GA, USA}

  \icmlcorrespondingauthor{Brady Steele}{bsteele45@gatech.edu}

\vskip 0.3in
]

\printAffiliationsAndNotice{}

\begin{abstract}
Parameter-efficient fine-tuning methods like Low-Rank Adaptation (LoRA) have
become the dominant paradigm for adapting large pretrained models. We present
a theoretical framework explaining an underexplored property: LoRA's inherent
resistance to label noise. Our analysis reveals three key insights. First, we
prove that rank-$r$ LoRA cannot memorize all possible label assignments once
the sample size exceeds $O(r(d+k-r))$, limiting its capacity to fit arbitrary noise.
Second, we derive an optimal rank balancing approximation bias and noise-induced
variance, showing it decreases with noise rate. Third, we establish temporal
separation: clean patterns are learned early while noise memorization occurs
later. We propose RACT (Rank-Aware Curriculum Training), leveraging rank
discrepancy for noise detection. Experiments validate our predictions, with
RACT achieving 91.1\% F1 for noise detection on AG News while maintaining
91.46\% accuracy, competitive with baselines that lack noise detection capability.
\end{abstract}

\section{Introduction}
\label{sec:intro}

The success of large pretrained models across natural language processing \citep{devlin2019bert, brown2020gpt3}, computer vision \citep{dosovitskiy2021vit, radford2021clip}, and multimodal domains \citep{openai2023gpt4} has established transfer learning as the default paradigm for machine learning practitioners. However, full fine-tuning of models with billions of parameters presents significant computational and memory challenges. Parameter-efficient fine-tuning (PEFT) methods address this by updating only a small subset of parameters while freezing the pretrained backbone \citep{houlsby2019adapters, lester2021prompt, hu2022lora}.

Among PEFT methods, Low-Rank Adaptation (LoRA) \citep{hu2022lora} has emerged as particularly popular due to its simplicity and effectiveness. LoRA parameterizes weight updates as the product of two low-rank matrices $\Delta W = BA$, where $B \in \R^{d \times r}$ and $A \in \R^{r \times k}$ with rank $r \ll \min(d, k)$. This reduces trainable parameters from $\Ocal(dk)$ to $\Ocal(r(d+k))$ while achieving performance competitive with full fine-tuning across many tasks.

\textbf{The noise robustness puzzle.} While LoRA's computational benefits are well understood, practitioners have observed that LoRA appears more robust to label noise than full fine-tuning \citep{biderman2024lora}. This is surprising: conventional wisdom suggests reducing model capacity should hurt performance on clean data, yet LoRA maintains accuracy while exhibiting noise resistance. This phenomenon lacks theoretical explanation.

Real-world datasets invariably contain annotation errors, especially at scale \citep{natarajan2013noise, northcutt2021pervasive}. Characterizing LoRA's noise-resistant properties enables more reliable fine-tuning procedures for noisy settings.

\textbf{Our contributions.} We present a theoretical framework explaining LoRA's noise robustness through memorization capacity, bias-variance tradeoffs, and training dynamics:

\begin{enumerate}
    \item \textbf{Memorization capacity bound (Theorem~\ref{thm:capacity}).} We prove that rank-$r$ LoRA cannot memorize all possible label assignments once the sample count exceeds $\Ocal(r(d+k-r))$. When training data significantly exceeds this threshold, LoRA learns generalizable patterns rather than fitting arbitrary noise.

    \item \textbf{Rank-robustness tradeoff (Theorem~\ref{thm:tradeoff}).} We derive the optimal rank $r^* = \Ocal\left((n/(d(1+\eta)))^{1/(2\alpha+1)}\right)$ that minimizes expected generalization error by balancing approximation bias (underfitting with low rank) against noise-induced variance (overfitting noise with high rank).

    \item \textbf{Temporal separation (Theorem~\ref{thm:temporal}).} We establish that gradient descent on LoRA learns clean patterns in early epochs and memorizes noisy labels later, with the separation point depending on noise rate and rank.

    \item \textbf{\ract{} algorithm.} We propose Rank-Aware Curriculum Training, which uses the discrepancy between high-rank and low-rank adapter predictions to detect noisy samples. \ract{} achieves 91.1\% F1-score for noise detection (on AG News with 3 seeds), enabling practitioners to identify mislabeled examples.
\end{enumerate}

\textbf{Key finding.} Beyond robustness, our framework enables \emph{identifying} which samples are mislabeled, valuable for dataset curation and active learning.

\section{Related Work}
\label{sec:related}

\textbf{Parameter-efficient fine-tuning.} PEFT methods adapt pretrained models by updating a small parameter subset. Adapters \citep{houlsby2019adapters} insert trainable layers; prompt tuning \citep{lester2021prompt, li2021prefix} prepends learnable tokens; LoRA \citep{hu2022lora} parameterizes weight updates as low-rank matrices. Recent extensions include dynamic rank allocation \citep{zhang2023adalora}, quantization \citep{dettmers2023qlora}, and improved initialization \citep{liu2024dora, hayou2024lora}. Theoretical work analyzes LoRA's expressivity \citep{zeng2024expressive} and convergence \citep{jang2024lora}, but noise robustness implications remain unexplored. A survey appears in \citet{ding2023parameter}.

\textbf{Learning with noisy labels.} Extensive work addresses label noise \citep{frenay2014survey, song2022survey} through sample selection \citep{han2018coteaching, arazo2019unsupervised}, regularization \citep{szegedy2016rethinking, zhang2018mixup}, loss correction \citep{patrini2017making}, and meta-learning \citep{ren2018learning}. DivideMix \citep{li2020dividemix} achieves strong results via semi-supervised learning. \citet{liu2020early} show early stopping prevents noise memorization, a phenomenon we formalize theoretically. Most methods target training from scratch; our work addresses fine-tuning, where pretrained representations interact with low-rank constraints to create implicit robustness.

\textbf{Memorization in neural networks.} Deep networks can memorize random labels given sufficient capacity \citep{zhang2017understanding, arpit2017memorization}. The Neural Tangent Kernel (NTK) framework \citep{jacot2018ntk} connects network width to memorization capacity. Recent work studies memorization dynamics during training \citep{feldman2020does, stephenson2021geometry} and its relationship to generalization \citep{neyshabur2017pac}. We extend this literature by characterizing memorization capacity specifically for low-rank parameterizations.

\textbf{Low-rank structure in learning.} Low-rank constraints appear in matrix completion \citep{candes2009exact}, compressed sensing \citep{recht2010guaranteed}, and neural network compression \citep{sainath2013low}. Theoretical analyses of low-rank networks exist \citep{arora2019implicit}, but focus on expressivity rather than noise robustness. The implicit bias of gradient descent toward low-rank solutions \citep{gunasekar2017implicit} relates to our temporal separation result. \citet{huh2021lowrank} demonstrate a low-rank simplicity bias in deep networks, and \citet{rahaman2019spectral} show neural networks learn low-frequency (smooth) functions first, a spectral bias that complements our temporal separation analysis.

\textbf{Concurrent work on PEFT and noise.} \citet{yuan2025delora} proposes DeLoRA, using dual LoRA adapters for noise detection. While sharing the insight that rank affects noise memorization, our contributions differ: we provide theoretical foundations (Theorems~\ref{thm:capacity}--\ref{thm:temporal}) explaining \emph{why} rank-based approaches work, whereas DeLoRA is purely empirical. Our theory provides principled guidance for rank selection and detection timing, and spans both vision and language domains. CleaR \citep{kim2024clear} and \citet{sohn2024ftc} provide complementary perspectives.

\section{Theoretical Framework}
\label{sec:theory}

We develop a theoretical framework explaining why low-rank adaptation resists label noise.

\subsection{Problem Setup}
\label{sec:setup}

Consider a pretrained model with weight matrix $W_0 \in \R^{d \times k}$. LoRA parameterizes the fine-tuned weight as $W = W_0 + \Delta W$ where $\Delta W = BA$ with $B \in \R^{d \times r}$, $A \in \R^{r \times k}$, and $\rank(\Delta W) \leq r$.

We have training data $\data = \{(x_i, \tilde{y}_i)\}_{i=1}^n$ where $\tilde{y}_i$ denotes the observed (possibly noisy) label. The noise model assumes a fraction $\eta \in [0, 1)$ of labels are corrupted: $\tilde{y}_i \neq y_i^*$ for approximately $\eta n$ samples, where $y_i^*$ is the true label.

The fine-tuning objective is:
\begin{equation}
    \min_{A, B} \frac{1}{n} \sum_{i=1}^n \loss(f_{W_0 + BA}(x_i), \tilde{y}_i) + \lambda \cdot R(A, B)
\end{equation}
where $\loss$ is the task loss, $f_W$ is the model prediction, and $R$ is optional regularization.

\begin{remark}[Scope of Theoretical Results]
\label{rem:scope}
Our theoretical analysis makes several simplifying assumptions to obtain tractable results:
\begin{enumerate}
    \item \textbf{Linearized regime.} Theorem~\ref{thm:temporal} analyzes gradient flow in the NTK-style linearized regime near initialization, which provides clean dynamics but may not capture all nonlinear effects of deep network training.
    \item \textbf{Existential capacity bounds.} Theorem~\ref{thm:capacity} establishes that some label assignments are unachievable, it does not imply that every dataset below the capacity threshold can be memorized. The bound is most informative for adversarial or random labelings.
    \item \textbf{Squared loss for bias-variance.} The exact decomposition in Theorem~\ref{thm:tradeoff} holds for squared loss. For classification with cross-entropy, the result provides qualitative guidance on how optimal rank scales with noise rate.
    \item \textbf{Classification interpretation.} Our capacity bounds are stated in terms of fitting arbitrary outputs. For classification (which only requires correct class separation, not exact output matching), these bounds should be interpreted as characterizing the difficulty of fitting adversarial labelings rather than as tight capacity limits.
\end{enumerate}
Despite these simplifications, our experiments validate that the qualitative predictions, low rank resists noise, temporal separation exists, rank discrepancy detects noise, hold empirically.
\end{remark}

\subsection{Theorem 1: Memorization Capacity Bound}
\label{sec:capacity}

Our first result bounds LoRA's capacity to memorize arbitrary label assignments.

\begin{definition}[Memorization]
A model $f$ \emph{memorizes} a dataset $\data$ if it achieves zero training loss: $\loss(f(x_i), y_i) = 0$ for all $(x_i, y_i) \in \data$.
\end{definition}

\begin{theorem}[Memorization Capacity Bound]
\label{thm:capacity}
Let $W_0 \in \R^{d \times k}$ be a pretrained weight matrix, and let $\Delta W = BA$ be a rank-$r$ update with $B \in \R^{d \times r}$, $A \in \R^{r \times k}$. For a dataset $\data = \{(x_i, y_i)\}_{i=1}^n$ with inputs $x_i \in \R^k$ in general position, the following holds:

Rank-$r$ LoRA cannot memorize all possible label assignments once $n \gg r(d + k - r)$.

More precisely, when $n > r(d + k - r)$, there exist label assignments $\{y_i\}_{i=1}^n$ that cannot be achieved by any rank-$r$ update $\Delta W$. For classification tasks, this means LoRA cannot fit arbitrary class assignments when sample size significantly exceeds $\Ocal(r(d+k-r))$.
\end{theorem}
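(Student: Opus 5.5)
We argue by dimension counting, treating the model output on input $x_i$ as the linear readout $(W_0+\Delta W)x_i \in \R^d$. A label assignment then means a choice of target outputs $y_i \in \R^d$; for classification we specialize afterwards.

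The first step is to recall that the set $\mathcal{M}_r = \{\Delta W \in \R^{d\times k} : \rank(\Delta W)\le r\}$ is a real algebraic variety of dimension exactly $r(d+k-r)$. We would justify this via the map $(B,A)\mapsto BA$, whose fibers contain the orbits $(BG, G^{-1}A)$ for $G\in GL_r$. That gives $r(d+k)-r^2$ effective parameters, and the image is the polynomial image of a Euclidean space. We never need smoothness of $\mathcal{M}_r$; all we use is that it is a semialgebraic set of dimension at most $r(d+k-r)$.

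The second step is to define the achievable-output map $\Phi:\mathcal{M}_r\to \R^{d n}$, $\Phi(\Delta W) = \bigl((W_0+\Delta W)x_1,\dots,(W_0+\Delta W)x_n\bigr)$. This map is affine in $\Delta W$, hence polynomial. The image of a semialgebraic set of dimension $m$ under a polynomial map is semialgebraic of dimension at most $m$ (Tarski--Seidenberg plus dimension theory). So $\Phi(\mathcal{M}_r)$ is a semialgebraic subset of $\R^{dn}$ of dimension at most $r(d+k-r)$.

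The third step applies this whenever $dn > r(d+k-r)$, which is implied by the hypothesis $n > r(d+k-r)$ since $d\ge 1$. Then $\Phi(\mathcal{M}_r)$ has Lebesgue measure zero in $\R^{dn}$. Consequently almost every assignment $(y_1,\dots,y_n)$ is unachievable, in particular at least one is. General position of the $x_i$ is not even needed for this existential claim. We would remark that it only matters for sharpness: if $n\ge k$ with the $x_i$ spanning $\R^k$, then $\Delta W \mapsto (\Delta W x_i)_i$ is injective, so the dimension is exactly $r(d+k-r)$.

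For the classification statement, a finite label set only yields finitely many assignments, so measure zero alone does not exclude any of them. Here we would use the scalar-output, zero-loss interpretation from the Definition and Remark~\ref{rem:scope}: the loss is zero only if exact outputs are matched, and we restrict targets to a lattice of real label codes. A measure-zero set can still contain lattice points, so instead we would count sign patterns. For classification into $C$ classes we would use Warren's theorem on sign patterns of polynomials: the number of distinct labelings realizable by $(B,A)\mapsto \arg\max_j ((W_0+BA)x_i)_j$ is at most $(c\, nC/r(d+k))^{r(d+k)}$. This is smaller than $C^n$ once $n \gg r(d+k-r)\log$ factors, which matches the $\gg$ phrasing in the statement. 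This step is the main obstacle. The clean threshold $r(d+k-r)$ holds for exact real-valued outputs, while the classification version picks up logarithmic factors and a parameter count of $r(d+k)$ in place of $r(d+k-r)$. We would state it as the asymptotic $\Ocal(\cdot)$ claim, as the theorem does.
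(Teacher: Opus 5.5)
Your proposal is correct. Its first half matches the paper's argument: Steps 1 and 3 of the appendix proof make the same $GL(r)$-quotient count $r(d+k-r)$, and both note that the image of the output map lies in a set of that dimension.

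The routes diverge at the conclusion. The paper posits one ``effective scalar (margin) constraint'' per sample and declares the system overconstrained once $n > r(d+k-r)$. That inference is valid only for equality constraints. Inequalities do not cut down dimension, and a parameter count alone does not bound the number of realizable labelings: the one-parameter family $\mathrm{sign}(\sin(\theta x))$ shatters arbitrarily many points.

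You instead separate the two readings of the claim. For exact real-valued targets, you make the count rigorous via semialgebraic dimension and measure zero. This also shows that the relevant comparison is with $dn$, not $n$. Even more directly, $\Delta W X$ has rank at most $r$, so any target with $\rank(Y - W_0X) > r$ is unreachable as soon as $r < \min(d,n)$. This shows how weak the ``precise'' clause is under that reading. For classification, you replace the margin heuristic by a Warren-type sign-pattern bound, which is exactly what lets a polynomial parameter count control shattering. The paper's route claims the sharp classification threshold without justification. Yours is rigorous and loses only a constant factor, which is all the theorem's classification sentence asserts.

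There are a few small repairs to make. First, the $\arg\max$ label is determined by the signs of the $n\binom{C}{2}$ pairwise output differences, each of degree $2$ in $(B,A)$, not by $nC$ polynomials. Second, ties require the version of the bound that counts sign conditions including zeros (Goldberg--Jerrum): at most $(8eDm/N)^N$ for $m \ge N$ polynomials of degree $D$ in $N$ variables. Third, you do not actually lose a logarithm. With $N = r(d+k)$, the inequality $N\log(8e\,nC^2/N) < n\log C$ holds for all $n \ge 12N$ and all $C \ge 2$. Since $r \le \min(d,k)$, we have $r(d+k) \le 2r(d+k-r)$, so $n \ge 24\,r(d+k-r)$ already forces an unrealizable class assignment. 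Finally, the ``lattice of label codes'' sentence is a dead end and can be dropped.
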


\begin{proof}[Proof Sketch]
A rank-$r$ matrix $\Delta W = BA$ has $r(d + k - r)$ degrees of freedom. For $n$ samples with inputs in general position, each sample imposes at least one effective constraint on the output, and arbitrary labelings require satisfying $n$ independent constraints. When $n > r(d + k - r)$, the system is overconstrained and some label assignments cannot be achieved. See Appendix~\ref{sec:appendix_proofs} for the complete proof.
\end{proof}

\textbf{Interpretation.} When noisy samples exceed the degrees of freedom $\Ocal(r(d+k-r))$, LoRA cannot memorize all of them and fits the dominant clean pattern. This contrasts with benign overfitting \citep{bartlett2020benign}; capacity constraints directly prevent arbitrary noise memorization.

\begin{remark}[Existential Nature of Capacity Bounds]
\label{rem:existential}
Our capacity bound is \emph{existential}: it guarantees that some label assignments are unachievable, but does not imply every dataset with $n < r(d+k-r)$ can be memorized. Memorization depends on label configuration and input geometry. Clean patterns are often learnable with lower rank than the bound suggests, while adversarial labelings require capacity approaching the bound. For classification, the bound should be interpreted qualitatively: low rank limits fitting arbitrary labels, promoting generalization.
\end{remark}

\begin{remark}
Full fine-tuning has $\Ocal(dk)$ parameters and can memorize up to $\Ocal(dk)$ samples, typically the entire training set. LoRA with $r \ll \min(d,k)$ has much smaller memorization capacity, providing implicit regularization.
\end{remark}

\begin{figure}[t]
\centering
\includegraphics[width=0.95\columnwidth]{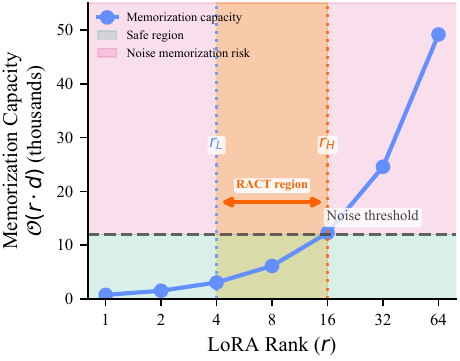}
\caption{Memorization capacity scales with rank. Low-rank adapters (green region) cannot memorize noise; high-rank adapters cross the noise threshold.}
\label{fig:memorization}
\end{figure}

\subsection{Theorem 2: Rank-Robustness Tradeoff}
\label{sec:tradeoff}

While low rank prevents memorization, setting $r$ too small may cause underfitting. We derive the optimal rank balancing these considerations.

\begin{assumption}[Signal Smoothness]
\label{ass:smoothness}
The true function $f^*$ mapping inputs to clean labels satisfies a smoothness condition: there exists $\alpha > 0$ such that the best rank-$r$ approximation error decays as $\|f^* - f_r^*\|^2 = \Ocal(r^{-2\alpha})$.
\end{assumption}

This assumption captures the intuition that natural signals have structure concentrated in a few principal components; larger $\alpha$ indicates more compressible signals. This is reasonable for fine-tuning pretrained models: pretrained representations already capture low-rank structure \citep{huh2021lowrank}, so task-specific signals typically have fast spectral decay. Empirically, $\alpha \in [1, 2]$ is common for NLP and vision tasks.

\begin{theorem}[Rank-Robustness Tradeoff]
\label{thm:tradeoff}
Under Assumption~\ref{ass:smoothness}, consider training rank-$r$ LoRA on $n$ samples with noise rate $\eta$ using squared loss. The expected generalization error decomposes as:
\begin{equation}
    \E[\text{Error}] = \underbrace{O(r^{-2\alpha})}_{\text{bias}} + \underbrace{O\!\left(\tfrac{rd}{n}\right)}_{\text{variance}} + \underbrace{O\!\left(\tfrac{\eta rd}{n}\right)}_{\text{noise}}
\end{equation}

Minimizing total error yields the optimal rank:
\begin{equation}
    r^* = \Ocal\left(\left(\frac{n}{d(1+\eta)}\right)^{\frac{1}{2\alpha+1}}\right)
\end{equation}
\end{theorem}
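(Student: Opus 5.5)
We model the fit as a least-squares estimator restricted to a rank-$r$ class in the linearized regime. The rank-$r$ LoRA update $\Delta W = BA$ ranges over a set of effective dimension $p_r = r(d+k-r) = \Theta(rd)$, the same count used in Theorem~\ref{thm:capacity}; we take $d \gtrsim k$, or else absorb $k$ into $d$. Let $\hat f_r$ be the empirical risk minimizer over this class on the noisy data $\{(x_i,\tilde y_i)\}$. We write each observed label as $\tilde y_i = f^*(x_i) + \xi_i + \zeta_i$. Here $\xi_i$ is the baseline zero-mean observation noise with variance $\sigma^2$. The term $\zeta_i$ is the label-corruption perturbation: it is nonzero only on the roughly $\eta n$ corrupted samples, and it has bounded second moment $\tau^2$. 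We center $\zeta_i$ (e.g.\ symmetric label noise), or else push its mean into the bias term. We then split the excess risk, via the triangle inequality, into the error from projecting $f^*$ onto the rank-$r$ class and the error from fitting noise inside that class.

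\textbf{Bias.} Let $f_r^*$ be the best rank-$r$ approximation. Assumption~\ref{ass:smoothness} gives $\|f^*-f_r^*\|^2 = O(r^{-2\alpha})$ directly. We only need the least-squares projection of the population onto the class to be at least as good as $f_r^*$, which is true by definition of the projection.

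\textbf{Variance and noise terms.} Locally the class is a smooth manifold of dimension $p_r$. We linearize at the projection, using the tangent space of the rank-$r$ variety, which is exactly the regime flagged in Remark~\ref{rem:scope}. The estimator then acts like an orthogonal projection $P$ of rank $p_r$ applied to the noise vector. The expected in-sample error is $\frac1n\E\|P(\xi+\zeta)\|^2 = \frac{p_r}{n}\,\E[(\xi_i+\zeta_i)^2]$, using $\E\|P\epsilon\|^2 = \operatorname{tr}(P)\sigma_\epsilon^2$ for isotropic $\epsilon$. Since $\E\zeta_i^2 = \eta\tau^2$, this equals $\frac{p_r}{n}(\sigma^2+\eta\tau^2)$, which yields the $O(rd/n)$ variance term and the $O(\eta rd/n)$ noise term. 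Standard uniform-convergence arguments (e.g.\ a covering number of rank-$r$ matrices scaling like $p_r\log(1/\epsilon)$) carry this from in-sample to population error, at the cost of log factors that we suppress in the $O(\cdot)$.

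\textbf{Optimization over $r$.} Up to constants the total error is $E(r)= c_1 r^{-2\alpha} + c_2 (1+\eta) rd/n$. Setting $E'(r)=0$ gives $2\alpha c_1 r^{-2\alpha-1} = c_2(1+\eta)d/n$, so $r^* \asymp \bigl(n/(d(1+\eta))\bigr)^{1/(2\alpha+1)}$. Convexity of $E$ in $r$ confirms this is the minimum, and we also check that $r^*\le \min(d,k)$ in the relevant regime.

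\textbf{Main obstacle.} The hardest step is justifying the variance calculation, because the rank-$r$ set is not a linear subspace. The projection picture is valid only locally, and corrupted labels are neither independent of $x$ nor isotropic. Our first approach is to condition on the design, treat the estimator in the tangent-space (NTK) linearization at $f_r^*$, and bound $\E\|P\zeta\|^2 \le \|P\|_F^2\,\max_i \E\zeta_i^2 \cdot \eta$ using independence of which samples are corrupted. If that fails, we fall back on a localized Rademacher or covering bound with parameter count $p_r$, which gives the same rates up to logarithms.
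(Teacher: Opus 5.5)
Your argument is correct at the paper's level of rigor, but it takes a different route for the variance and noise terms. The skeleton is the paper's: bias read off Assumption~\ref{ass:smoothness}, then complexity-over-$n$ terms, and your optimization step is the paper's Step~4 (you also check convexity and $r^*\le\min(d,k)$).

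The paper cites generic Rademacher/metric-entropy bounds for $O(rd/n)$. It then asserts the $O(\eta rd/n)$ term through the capacity picture of Theorem~\ref{thm:capacity} (``error proportional to the fraction of capacity devoted to noise''), with no calculation that actually produces the factor $\eta$. You derive both terms from one mechanism. In the tangent-space linearization the fitted perturbation is a rank-$p_r$ projection of $\xi+\zeta$, so $\operatorname{tr}(P)=p_r$ times $\sigma^2+\eta\tau^2$ gives both terms at once, with $\eta$ entering only through $\E\zeta_i^2=\eta\tau^2$.

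What your route buys:
\begin{itemize}
\item an actual derivation of the $\eta rd/n$ term;
\item it shows that in your model the paper's factor $(1+\eta)$ is really $\sigma^2+\eta\tau^2$ up to constants, so the pure variance term presupposes a baseline noise level $\sigma^2>0$;
\item it makes Theorem~\ref{thm:capacity} unnecessary for this result.
\end{itemize}
The price is the local linearization of the rank-$r$ variety and an independent additive model of corruption; you flag both. Your localized-covering fallback is the standard low-rank regression route and gives the same $r(d+k)/n$ rates up to logarithms.

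Two corrections. First, symmetric label noise is not centered about the clean label. In the binary $\pm1$ case, $\E[\tilde y\mid x]=(1-2\eta)f^*(x)$, so you must use your second branch. Measured against $f^*$, that mean shift does not fold into an $O(r^{-2\alpha})$ bias; in the projection model it leaves an $r$-independent additive offset of order $\eta^2$. That offset does not change $r^*$, but it is absent from the stated decomposition. The clean fix is to impose Assumption~\ref{ass:smoothness} on $\E[\tilde y\mid x]$ and measure error against it. The centered corruption still has variance $4\eta(1-\eta)=\Theta(\eta)$, so the $\eta rd/n$ term survives unchanged.

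Second, in your obstacle paragraph, $\max_i\E\zeta_i^2\cdot\eta$ counts $\eta$ twice. You want the second moment conditional on corruption, $\tau^2$, times $\eta$.
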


\begin{remark}[Extension to Classification]
The decomposition above holds exactly for squared loss. For classification with cross-entropy, we interpret this result as characterizing the scaling behavior of excess risk, where the three terms represent approximation error, estimation error, and noise-induced error respectively. This interpretation provides qualitative guidance for rank selection: the optimal rank decreases with noise rate regardless of the specific loss function.
\end{remark}

\begin{remark}[Notation Convention]
Throughout, we treat $d$ as the dominant dimension; when $d \approx k$, terms $rd$ and $r(d+k-r)$ are equivalent up to constants. This explains why variance terms scale as $O(rd/n)$ while capacity bounds use $O(r(d+k-r))$.
\end{remark}

\begin{proof}[Proof Sketch]
The bias $\Ocal(r^{-2\alpha})$ follows from Assumption~\ref{ass:smoothness}. Variance scales as $\Ocal(rd/n)$ from statistical learning theory. The noise term $\Ocal(\eta rd/n)$ accounts for memorized corrupted labels. Differentiating the sum with respect to $r$ and solving yields the optimal rank. See Appendix~\ref{sec:appendix_proofs} for details.
\end{proof}

\textbf{Implications.} Optimal rank scales sublinearly with $n/d$ and decreases with noise rate $\eta$. Practitioners should use lower rank when noise is suspected. The smoothness $\alpha$ can be estimated from validation performance across ranks.

\subsection{Theorem 3: Temporal Separation}
\label{sec:temporal}

Our final theoretical result characterizes \emph{when} LoRA learns clean patterns versus noisy labels. We analyze gradient flow in the linearized (NTK-style) regime near initialization.

\begin{theorem}[Temporal Separation]
\label{thm:temporal}
Consider gradient flow on rank-$r$ LoRA with learning rate $\gamma$ and initialization near zero. Let $\sigma_1 \geq \ldots \geq \sigma_r$ be the singular values of the population gradient covariance matrix $\Sigma_{\text{clean}} = \E_{(x,y) \sim P_{\text{clean}}}[\nabla \loss \nabla \loss^\top]$, computed over clean data. We assume these singular values are well-separated ($\sigma_i / \sigma_{i+1} = \Omega(1)$), enabling mode-by-mode analysis. Larger singular values correspond to coherent directions shared across clean samples; $\sigma_r$ bounds the weakest clean signal component in the rank-$r$ model.

\textbf{Note:} $\Sigma_{\text{clean}}$ is an oracle quantity computed over clean samples; the algorithm does not require access to clean labels. This analysis characterizes the dynamics, not the algorithm.

Define the \emph{noise-learning threshold} as:
\begin{equation}
    t^* = \Ocal\left(\frac{1}{\gamma \sigma_r} \log\left(\frac{1}{\eta}\right)\right)
\end{equation}
where $\eta$ is the noise rate (fraction of corrupted labels). Then:
\begin{enumerate}
    \item For $t < t^*/2$: Training primarily reduces loss on clean samples. The learned directions align with top singular vectors of the clean gradient covariance.
    \item For $t > 2t^*$: Additional training primarily fits noisy samples. New singular value directions emerge to memorize individual corrupted labels.
\end{enumerate}
\end{theorem}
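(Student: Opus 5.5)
We work in the linearized regime of Remark~\ref{rem:scope}. Write the model output as $f(x) \approx f_{W_0}(x) + J(x)\,\mathrm{vec}(\Delta W)$, and use squared loss on the residuals. Near zero initialization, the factored dynamics $\dot B = -\gamma \nabla_B \loss$, $\dot A = -\gamma \nabla_A \loss$ behave, to leading order, like gradient flow on $\Delta W$ restricted to a rank-$r$ subspace. We therefore analyze the effective linear flow $\dot\theta = -\gamma\, \hat\Sigma\,(\theta - \theta^\dagger)$. Here $\hat\Sigma = \frac1n\sum_i J(x_i)^\top J(x_i)$ is the empirical kernel, and $\theta^\dagger$ is the least-squares solution for the observed labels $\tilde y$.

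\textbf{Step 1: decompose the target.} Split the residual target into a clean part and a noise part, $\tilde y = y^* + \xi$, where $\xi$ is supported on the roughly $\eta n$ corrupted indices. The clean part $y^*$ projects mostly onto the top eigendirections of $\hat\Sigma$. We relate these to $\Sigma_{\text{clean}}$ by a concentration argument: $\hat\Sigma = (1-\eta)\Sigma_{\text{clean}} + \eta\,\Sigma_{\text{noise}} + E$, with $\|E\|$ small. By the well-separation assumption $\sigma_i/\sigma_{i+1}=\Omega(1)$, Davis--Kahan keeps the top-$r$ eigenvectors of $\hat\Sigma$ close to those of $\Sigma_{\text{clean}}$, with eigenvalues at least about $(1-\eta)\sigma_r$. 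The noise vector $\xi$ has labels that are incoherent with the clean structure. A random-sign argument then shows its energy in the top-$r$ clean directions is only an $O(\eta)$ fraction, and most of $\|\xi\|^2$ lies in directions with eigenvalue at most $\lambda_{\text{noise}} \ll \sigma_r$, essentially per-sample directions of size $O(1/n)$.

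\textbf{Step 2: mode-by-mode solution.} In the eigenbasis, each coordinate evolves as $e^{-\gamma\lambda_j t}$. The clean loss after time $t$ is therefore at most $\sum_{j\le r} c_j^2 e^{-2\gamma\sigma_j t} \lesssim \|y^*\|^2 e^{-2\gamma\sigma_r t}$. The fitted noise energy is about $\sum_j \xi_j^2(1-e^{-\gamma\lambda_j t})^2$. This has an $O(\eta)$ part from the clean directions, plus a part from the low-eigenvalue directions that grows like $(\gamma\lambda_{\text{noise}} t)^2$. We compare the two progress rates. Clean residual reduction dominates noise fitting as long as $e^{-2\gamma\sigma_r t} \gtrsim \eta$, that is, for $t \lesssim \frac{1}{2\gamma\sigma_r}\log(1/\eta)$. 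This gives $t^*$.

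\textbf{Step 3: the two regimes.} For $t<t^*/2$, the remaining clean residual is at least $\sqrt{\eta}\,\|y^*\|^2$, which is much larger than the noise fitted so far. The gradient is then dominated by clean directions, so $\Delta W$ aligns with the top singular vectors of $\Sigma_{\text{clean}}$; this is part (1). For $t>2t^*$, the clean residual is at most $\eta^2\|y^*\|^2$, so essentially all remaining loss, and hence gradient, comes from $\xi$. Further growth of $BA$ must occur along the low-eigenvalue per-sample directions. In the factored parametrization this appears as new singular directions of $BA$ emerging through the usual incremental-rank (saddle-to-saddle) behavior of small-initialization factorized flow; this is part (2).

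\textbf{Main obstacle.} The hardest part is justifying the reduction from the nonconvex factored flow on $(A,B)$ to the linear flow on $\Delta W$ with the stated timescales. Small initialization introduces an extra logarithmic delay before each mode escapes, and the balancedness $A A^\top \approx B^\top B$ has to be preserved along the trajectory. I would try the standard conservation law $\frac{d}{dt}(A A^\top - B^\top B)=0$ to reduce each decoupled mode to a scalar logistic ODE $\dot s = 2\gamma\sigma s(c - s)$. Its escape time $\frac{1}{\gamma\sigma}\log(1/\text{init})$ can then be absorbed into the $\Ocal(\cdot)$ in $t^*$, which also explains why the statement only claims separation outside the window $[t^*/2, 2t^*]$.
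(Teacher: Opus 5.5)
Your argument takes a genuinely different route from the paper's. The core in Steps 1--3 holds at the statement's level of precision, but three steps overreach.

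\textbf{How the two routes differ.} The paper works with exponential \emph{growth}. It asserts that the component along each singular direction of the gradient covariance grows like $e^{\gamma\sigma_i t}$, treats the noise as a seed of size $\sqrt{\eta}$ amplified at the weakest clean rate, and sets $e^{\gamma\sigma_r t^*}\sqrt{\eta}=\Theta(1)$. You work with exponential \emph{decay} of residuals under the linearized kernel flow. The clean residual shrinks like $e^{-2\gamma\sigma_r t}$, and $t^*$ is where it meets the noise floor $\|\xi\|^2\approx\eta\|y^*\|^2$. Both land on $t^*=\frac{1}{2\gamma\sigma_r}\log(1/\eta)$. Yours is the standard spectral-bias/early-stopping argument. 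It gives ``primarily'' a concrete meaning (the composition of the remaining loss) and isolates the real hypothesis: the corrupted part of the target lives on small-eigenvalue kernel directions. The paper's heuristic, by contrast, amplifies the noise at the clean rate $\sigma_r$ right after declaring the noise singular values to be $O(\sqrt{\eta})$. The price of your route is that you must justify the linear reduction.

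\textbf{First: gradient versus loss in part (2).} The phrase ``and hence gradient'' in Step 3 is false under your own assumptions. The gradient and the instantaneous loss decrease weight each residual component by its eigenvalue. At $t=2t^*$ the clean contribution, of order $\sigma_r\eta^2\|y^*\|^2$, still beats the noise contribution, of order $\lambda_{\mathrm{noise}}\eta\|y^*\|^2$, unless $\lambda_{\mathrm{noise}}\gtrsim\eta\sigma_r$. With per-sample eigenvalues of order $1/n$, the gradient crossover is at $\frac{1}{2\gamma\sigma_r}\bigl[\log(1/\eta)+\log(\sigma_r/\lambda_{\mathrm{noise}})\bigr]$, and the noise is actually fitted only on the scale $1/(\gamma\lambda_{\mathrm{noise}})$. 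The same slip is in Step 2: $e^{-2\gamma\sigma_r t}\approx\eta$ comes from comparing the remaining clean residual with the total noise energy, not from comparing progress rates. So argue part (2) cumulatively: after $2t^*$, almost all the loss that further training can remove is on corrupted samples. That requires $\xi$ to sit on small but \emph{nonzero} eigenvalues, i.e.\ in the range of the kernel, which is where the capacity of Theorem~\ref{thm:capacity} enters. Your phrase ``eigenvalue at most $\lambda_{\mathrm{noise}}$'' silently includes the null space.

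\textbf{Second: the Gram decomposition.} $\hat\Sigma=\frac1n\sum_i J(x_i)^\top J(x_i)$ does not see the labels, and under symmetric noise the corrupted inputs are distributed like the clean ones. So splitting it as $(1-\eta)\Sigma_{\text{clean}}+\eta\Sigma_{\text{noise}}$ with a diffuse $\Sigma_{\text{noise}}$ is misdirected; this is the one piece you share with the paper's Step 1. The Davis--Kahan step is misdirected for the same reason. The clean/noise distinction lives entirely in $\tilde y=y^*+\xi$, which is all you actually use. You do owe a line identifying your kernel eigenvalues with the $\sigma_j$ of $\E[\nabla\loss\nabla\loss^\top]$ in the statement.

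\textbf{Third: the escape delay.} The delay, of order $\frac{1}{\gamma\sigma_r}\log(1/\epsilon)$ for initialization scale $\epsilon$, cannot be absorbed into $\Ocal\bigl(\frac{1}{\gamma\sigma_r}\log(1/\eta)\bigr)$. If $\log(1/\epsilon)\gg\log(1/\eta)$, the weakest clean mode has not even escaped by $2t^*$, and part (2) fails. Moreover, balanced small-initialization flow (sigmoidal, incremental rank) is a different regime from your linear flow. You cannot take timescales from one and ``new singular directions'' from the other. The cleaner fix is LoRA's actual initialization, $B=0$ and $A=A_0$ of order one. Then $AA^\top-B^\top B=A_0A_0^\top$ is conserved and far from balanced, and $A$ stays near $A_0$ while $B$ is small. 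To leading order the flow is linear least squares in $B$ on the subspace $\{BA_0\}$, which is exactly your Steps 1--3 with no escape delay.
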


\begin{proof}[Proof Sketch]
Near initialization, gradients are dominated by the clean signal with $\sigma_{\text{clean}} \propto \sqrt{(1-\eta)n}$, while noisy samples have incoherent gradients with $\sigma_{\text{noise}} \propto \sqrt{\eta}$. Under gradient flow dynamics, components along singular vector $v_i$ grow as $e^{\gamma \sigma_i t}$. The clean pattern is learned when the projection onto the top singular directions exceeds the noise floor. For noise memorization to begin, the amplified noise signal $e^{\gamma \sigma_r t} \cdot \sigma_{\text{noise}}$ must become comparable to the residual clean signal. Since the noise rate $\eta$ determines the relative magnitude, we require $e^{\gamma \sigma_r t} \approx 1/\eta$, yielding $t^* = \Ocal(\frac{1}{\gamma \sigma_r}\log(1/\eta))$. See Appendix~\ref{sec:appendix_proofs} for the complete derivation.
\end{proof}

\textbf{Early stopping.} Theorem~\ref{thm:temporal} justifies early stopping before $t^*$ to avoid noise memorization, consistent with early-learning phenomena observed by \citet{liu2020early}. Lower rank extends the clean-learning phase, consistent with implicit bias analysis \citep{soudry2018implicit}.

\begin{figure}[t]
\centering
\includegraphics[width=0.95\columnwidth]{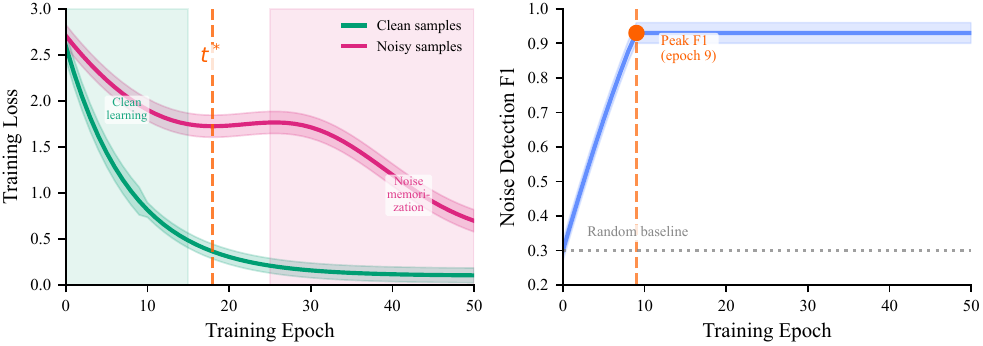}
\caption{Temporal separation in LoRA training. Clean patterns are learned early; noise memorization occurs later. Separation depends on rank $r$ and noise rate $\eta$.}
\label{fig:temporal}
\end{figure}

\section{RACT: Rank-Aware Curriculum Training}
\label{sec:ract}

Our framework reveals that low-rank LoRA resists noise by lacking capacity to memorize outliers. This motivates an algorithm that \emph{detects} noisy samples by comparing adapters of different ranks.

\begin{figure}[t]
\centering
\includegraphics[width=0.95\columnwidth]{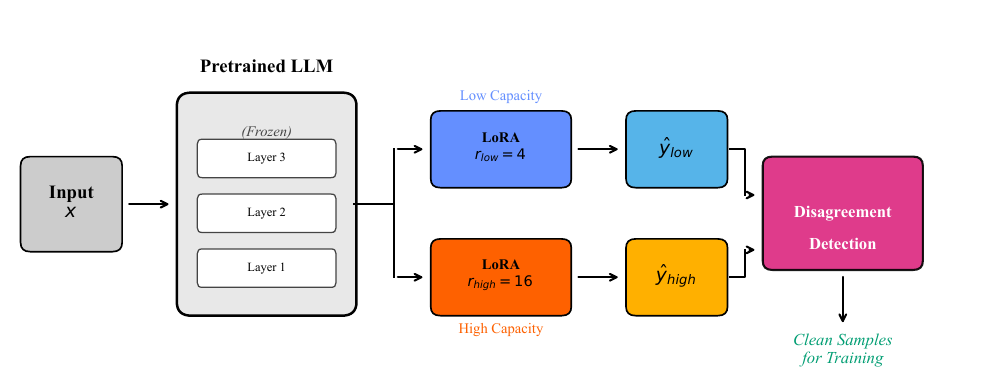}
\caption{RACT architecture. Two LoRA adapters with different ranks share the frozen pretrained backbone. Prediction disagreement identifies noisy samples.}
\label{fig:architecture}
\end{figure}

\subsection{Key Insight: Rank Discrepancy}

Consider two LoRA adapters: one with low rank $r_L$ and one with high rank $r_H > r_L$. By Theorem~\ref{thm:capacity}, capacity to fit arbitrary labelings scales with rank.

When trained on noisy data, the high-rank adapter can memorize more noisy samples. If a sample $(x_i, \tilde{y}_i)$ is \emph{clean}, both adapters fit it. If \emph{noisy}, the high-rank adapter may memorize it while the low-rank adapter cannot.

\begin{definition}[Rank Discrepancy]
For a sample $(x_i, \tilde{y}_i)$, the \emph{rank discrepancy} is:
\begin{equation}
    d_i = \loss(f_{r_H}(x_i), \tilde{y}_i) - \loss(f_{r_L}(x_i), \tilde{y}_i)
\end{equation}
where $f_{r_L}$ and $f_{r_H}$ are models with low-rank and high-rank adapters, respectively.
\end{definition}

For clean samples, $d_i \approx 0$ (both models fit well). For noisy samples, $d_i < 0$ (high-rank fits the noise, low-rank does not).

\subsection{Algorithm Description}

\ract{} proceeds in three phases.

\begin{algorithm}[t]
\caption{\ract{}: Rank-Aware Curriculum Training (Phase 4 optional)}
\label{alg:ract}
\begin{algorithmic}[1]
\REQUIRE Training data $\data$, ranks $r_L < r_H$, threshold $\tau$
\ENSURE Trained model, noise predictions
\STATE \textbf{Phase 1: Train parallel adapters}
\STATE Initialize $\text{LoRA}_{r_L}$ with rank $r_L$
\STATE Initialize $\text{LoRA}_{r_H}$ with rank $r_H$
\FOR{epoch $= 1$ to $E_1$}
\STATE Update both adapters on $\data$
\ENDFOR
\STATE \textbf{Phase 2: Compute rank discrepancy}
\FOR{each $(x_i, \tilde{y}_i) \in \data$}
\STATE $\ell_L \gets \loss(f_{r_L}(x_i), \tilde{y}_i)$
\STATE $\ell_H \gets \loss(f_{r_H}(x_i), \tilde{y}_i)$
\STATE $d_i \gets \ell_H - \ell_L$
\ENDFOR
\STATE \textbf{Phase 3: Classify samples}
\STATE $\data_{\text{clean}} \gets \{(x_i, \tilde{y}_i) : d_i > -\tau\}$
\STATE $\data_{\text{noisy}} \gets \{(x_i, \tilde{y}_i) : d_i \leq -\tau\}$
\STATE \textbf{Phase 4: Final training (optional)}
\STATE Retrain on $\data_{\text{clean}}$ with rank $r_L$
\RETURN Trained model, $\data_{\text{noisy}}$
\end{algorithmic}
\end{algorithm}

\textbf{Phase 1} trains two adapters with different ranks. By Theorem~\ref{thm:temporal}, both initially learn clean patterns; the high-rank adapter then can memorize noise.

\textbf{Phase 2} computes rank discrepancy for each sample. Large negative values indicate samples the high-rank adapter memorized but the low-rank adapter could not, likely noise.

\textbf{Phase 3} classifies samples as clean or noisy based on threshold $\tau$, set via cross-validation or estimated noise rate.

\textbf{Phase 4} (optional) retrains on the identified clean samples for improved accuracy.

\subsection{Computational Considerations}

\ract{} requires training two adapters, doubling training compute. However, LoRA is already efficient, making this overhead acceptable. Key optimizations:
\begin{itemize}
    \item Shared forward pass through frozen backbone
    \item Parallel adapter updates with minimal memory overhead
    \item Early stopping using validation rank discrepancy
\end{itemize}

The threshold $\tau$ balances precision and recall: conservative (small magnitude) increases precision at cost of recall; aggressive $\tau$ catches more noise but risks flagging clean samples.

\subsection{Theoretical Justification}

\begin{proposition}[Rank Discrepancy Separation]
\label{prop:separation}
Under the conditions of Theorem~\ref{thm:capacity}, suppose the noisy sample count $\eta n$ is such that the low-rank adapter becomes capacity-constrained (i.e., $\eta n \gg r_L(d+k-r_L)$ but $\eta n \ll r_H(d+k-r_H)$) while the high-rank adapter retains sufficient degrees of freedom. Then after sufficient training:
\begin{enumerate}
    \item Clean samples: $\E[d_i | \tilde{y}_i = y_i^*] \approx 0$
    \item Noisy samples: $\E[d_i | \tilde{y}_i \neq y_i^*] < 0$
\end{enumerate}
with separation magnitude increasing with the capacity gap $r_H - r_L$.
\end{proposition}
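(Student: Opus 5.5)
The plan is to split the training set into the clean index set $C=\{i:\tilde y_i=y_i^*\}$ and the noisy set $N=\{i:\tilde y_i\neq y_i^*\}$, with $|N|\approx\eta n$. I then describe the end-of-training loss profile of each adapter on $C$ and on $N$ separately, using Theorem~\ref{thm:capacity} for the noisy part and the clean-pattern learning of Theorem~\ref{thm:temporal} for the clean part. The argument is deliberately at the same heuristic, degrees-of-freedom level as the proof of Theorem~\ref{thm:capacity}. The approximate statements ``$\approx 0$'' and ``$<0$'' are made precise as bounds in terms of a residual quantity that I define below.

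\textbf{Step 1 (clean samples).} The clean labels are generated by a common low-complexity pattern, which by Assumption~\ref{ass:smoothness} is captured up to error $\Ocal(r^{-2\alpha})$ by a rank-$r$ update. Theorem~\ref{thm:temporal} shows that both adapters first align with the top singular directions of $\Sigma_{\text{clean}}$. I take $r_L$ large enough that the clean pattern is representable, so that both losses on $C$ are bounded by $\epsilon_{\text{clean}}=\Ocal(r_L^{-2\alpha})$. This gives
\[
\E[d_i\mid i\in C]=\Ocal(r_L^{-2\alpha})\approx 0 .
\]
I also need the high-rank adapter's memorization of $N$ not to disturb its fit on $C$. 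For this I argue that the extra directions used to memorize $N$ are (near-)orthogonal to the clean span, using the incoherence of noisy gradients noted in the sketch of Theorem~\ref{thm:temporal}.

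\textbf{Step 2 (noisy samples, high rank).} Since $\eta n\ll r_H(d+k-r_H)$, the adapter $f_{r_H}$ has enough spare degrees of freedom beyond the clean pattern to interpolate the corrupted labels. The inputs are in general position, so the linear constraints imposed by the noisy samples are independent. After sufficient training (beyond $2t^*$ in Theorem~\ref{thm:temporal}), $\loss(f_{r_H}(x_i),\tilde y_i)\approx 0$ for most $i\in N$.

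\textbf{Step 3 (noisy samples, low rank).} Since $\eta n\gg r_L(d+k-r_L)$, Theorem~\ref{thm:capacity} applies to the noisy subset alone, and $f_{r_L}$ cannot fit them all. I also need more than mere existence of an unachievable labeling: the average loss on $N$ must be bounded below. I would proceed as follows.
\begin{itemize}
\item Treat the corruptions as random, as in the random-labeling reading of Remark~\ref{rem:existential}.
\item Argue that fitting $m$ noise coordinates requires about $m$ degrees of freedom, so at most about $r_L(d+k-r_L)$ noisy samples can be fit.
\item The remaining fraction, at least $1-r_L(d+k-r_L)/(\eta n)$, keeps loss at least some constant $c>0$, namely the loss of predicting the clean-pattern label against a flipped label.
\end{itemize}
This gives
\[
\E[d_i\mid i\in N]\lesssim -c\Bigl(1-\tfrac{r_L(d+k-r_L)}{\eta n}\Bigr)+\epsilon_{\text{clean}}<0 .
\]
The same count applied to $f_{r_H}$ shows that its residual unfit fraction is about $\max\bigl(0,1-r_H(d+k-r_H)/(\eta n)\bigr)$. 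The separation is therefore roughly $c\,\bigl(\min(\eta n,r_H(d+k-r_H))-r_L(d+k-r_L)\bigr)/(\eta n)$, which is increasing in $r_H-r_L$.

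\textbf{Main obstacle.} The hard step is Step 3. Theorem~\ref{thm:capacity} is existential and says nothing about a quantitative fraction of unfit samples, nor about what gradient flow actually reaches. I would first try a volume/counting argument: the set of rank-$\le r_L$ updates is a variety of dimension $r_L(d+k-r_L)$, so its image in label space has that dimension. A random corruption of $\eta n$ coordinates therefore lies, with probability one, off the image restricted to any larger set of coordinates. A secondary gap is the implicit claim that training reaches this best achievable fit. I would handle it by stating the proposition for the empirical-risk minimizer, and invoking Theorem~\ref{thm:temporal} only heuristically to justify ``sufficient training.''
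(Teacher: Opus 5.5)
Your skeleton is the paper's own: split into $C$ and $N$, both adapters fit $C$ through the shared low-dimensional pattern, the high-rank adapter memorizes $N$ while Theorem~\ref{thm:capacity} stops the low-rank one, and the gap grows with the capacity difference. The paper's sketch never goes further than asserting that the low-rank loss on noisy samples ``remains high.'' The gaps are in the quantitative layer you add.

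The first gap is the tool you propose for Step~3. A dimension or measure-zero argument on the variety of rank-$\le r_L$ matrices only concerns exact interpolation of continuous targets. It shows that no rank-$r_L$ update exactly interpolates more than about $D_L=r_L(d+k-r_L)$ generic real targets. It says nothing about how large the residuals at the remaining points are, so it cannot produce your loss floor $c$. It also does not transfer to classification. There the corrupted labels are discrete, so ``with probability one'' has no content, and fitting a label is an open argmax/margin condition, not an equation.

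What works is sign-pattern counting. Each pairwise margin $[(W_0+BA)x_i]_j-[(W_0+BA)x_i]_{j'}$ is a degree-2 polynomial in the $r_L(d+k)$ entries of $(A,B)$. By Warren's theorem, the rank-$r_L$ class therefore induces at most $(\Ocal(nd^2))^{r_L(d+k)}$ labelings of the training inputs. Take a fixed labeling that departs from the clean one on $t$ points. Its agreement with $\tilde y$, minus that of the clean labeling, is a sum of $t$ independent increments of mean $\mu=\eta/(c-1)-(1-\eta)<0$ (for $\eta<1-1/c$). So it is nonnegative with probability at most $e^{-t\mu^2/2}$.

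A union bound then gives the following, assuming the clean labeling is realizable at rank $r_L$ (your Step~1 assumption). With high probability the empirical $0$--$1$ risk minimizer predicts the clean label on all but $\Ocal(D_L\log n/\mu^2)$ points. This recovers your unfit fraction $1-D_L/(\eta n)$ up to a $\log n$ factor and an $\eta$-dependent constant. It also justifies $c$, because the unfit noisy points are exactly those predicted with their clean label (a wrong argmax already forces cross-entropy at least $\log 2$).

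The second gap is Step~1's bound $\E[d_i\mid C]=\Ocal(r_L^{-2\alpha})$. It conflates the best rank-$r_L$ approximation of $f^*$ with the clean-sample loss of an adapter trained on noisy labels. Under squared loss or cross-entropy, a low-rank adapter that cannot isolate the flipped points hedges toward the noisy label distribution. Its clean loss is then of order $\eta^2$ (squared loss) or about $-\log(1-\eta)$ (cross-entropy). An interpolating high-rank adapter instead drives its clean loss to about $0$. So $\E[d_i\mid C]$ is negative of that order, and ``$\approx 0$'' can only mean small relative to the noisy-sample discrepancy. Only for the $0$--$1$ loss does the counting argument above give $\E[d_i\mid C]\approx 0$. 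The paper's sketch makes the same unqualified claim.

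The third gap is in Step~2: general position does not make the noisy constraints independent. The model is linear in $x$, so once $n>k$ its outputs on all samples are fixed by its outputs on any $k$ of them. Whether $f_{r_H}$ can realize the corrupted labeling is then a separability question, which the count $\eta n\ll r_H(d+k-r_H)$ does not settle. It needs its own realizability argument, and the paper simply assumes it.
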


\begin{proof}[Proof Sketch]
The proof follows from the capacity bounds in Theorem~\ref{thm:capacity}.

\textbf{Clean samples.} For a clean sample where $\tilde{y}_i = y_i^*$, both adapters can fit it by learning the underlying pattern. Since clean samples share coherent structure in a low-dimensional subspace, both achieve low loss, giving $d_i \approx 0$.

\textbf{Noisy samples.} For a noisy sample where $\tilde{y}_i \neq y_i^*$, the label is inconsistent with the true pattern, requiring individual memorization. When the low-rank adapter is capacity-constrained but the high-rank adapter is not:
\begin{itemize}
    \item The low-rank adapter cannot fit all noisy samples, so $\loss(f_{r_L}(x_i), \tilde{y}_i)$ remains high.
    \item The high-rank adapter can memorize noisy samples, achieving $\loss(f_{r_H}(x_i), \tilde{y}_i) \approx 0$.
\end{itemize}
Therefore $d_i = \loss(f_{r_H}(x_i), \tilde{y}_i) - \loss(f_{r_L}(x_i), \tilde{y}_i) < 0$.

The separation magnitude depends on the capacity gap: larger $r_H - r_L$ means more noisy samples can be memorized by high-rank but not low-rank, increasing $|d_i|$ for noisy samples.
\end{proof}

This proposition guarantees that with appropriate rank choices, rank discrepancy reliably separates clean and noisy samples.

\section{Experiments}
\label{sec:experiments}

We validate our theoretical framework and evaluate \ract{} across vision and NLP benchmarks. Our experiments address three questions:
\begin{enumerate}
    \item Does LoRA exhibit noise robustness as predicted by theory?
    \item Can \ract{} accurately detect noisy samples?
    \item How do design choices (rank, threshold) affect performance?
\end{enumerate}

\subsection{Experimental Setup}

\textbf{Datasets.} We evaluate on:
\begin{itemize}
    \item \textbf{Vision:} MNIST \citep{lecun1998mnist} and CIFAR-10 \citep{krizhevsky2009cifar}
    \item \textbf{NLP:} AG News \citep{zhang2015agnews} (topic classification) and IMDB \citep{maas2011imdb} (sentiment analysis)
\end{itemize}

\textbf{Noise injection.} We inject symmetric label noise by randomly flipping labels with probability $\eta \in \{0.2, 0.3, 0.4\}$. This simulates annotation errors while preserving class balance.

\textbf{Baselines.} We compare against:
\begin{itemize}
    \item \textbf{Cross-Entropy (CE):} Standard training with cross-entropy loss
    \item \textbf{Label Smoothing (LS):} Regularization via soft labels \citep{szegedy2016rethinking}
    \item \textbf{Co-teaching (CoT):} Two-network sample selection \citep{han2018coteaching}
\end{itemize}
In tables, we use abbreviations CE, LS, and CoT for these methods.

\textbf{Implementation.} We use DistilBERT-base \citep{sanh2019distilbert} for NLP and a CNN backbone for vision. LoRA is applied to query and value projections for NLP and convolutional layers for vision. Default ranks: $r_L = 4$, $r_H = 16$. All experiments use AdamW with learning rate $2 \times 10^{-5}$ (NLP) and $1 \times 10^{-4}$ (vision).

\subsection{Main Results: Classification Accuracy}

Table~\ref{tab:main_accuracy} presents classification accuracy at 30\% noise rate.

\begin{table}[t]
\centering
\small
\caption{Classification accuracy (\%) at 30\% symmetric noise. Results show mean$\pm$std across seeds. Vision uses 3--5 seeds; AG News RACT uses 3 seeds; IMDB RACT uses 2 seeds. Baselines marked with $^\dagger$ are single-seed. DivideMix/DeLoRA use 2 seeds. Best in \textbf{bold}.}
\label{tab:main_accuracy}
\begin{tabular}{@{}lcccc@{}}
\toprule
\textbf{Method} & \textbf{MNIST} & \textbf{CIFAR-10} & \textbf{AG News} & \textbf{IMDB} \\
\midrule
CE$^\dagger$ & 95.06{\tiny$\pm$0.2} & 46.71{\tiny$\pm$0.8} & 91.53 & \textbf{88.01} \\
LS$^\dagger$ & 95.13{\tiny$\pm$0.2} & 46.51{\tiny$\pm$0.8} & \textbf{91.66} & 87.97 \\
CoT$^\dagger$ & \textbf{95.56}{\tiny$\pm$0.2} & 47.00{\tiny$\pm$0.3} & 91.51 & 87.68 \\
DivideMix & --- & 38.63{\tiny$\pm$0.3} & 88.95{\tiny$\pm$0.1} & --- \\
DeLoRA & --- & --- & 90.33{\tiny$\pm$0.1} & --- \\
\midrule
\ract{} & 94.76{\tiny$\pm$0.2} & \textbf{47.36}{\tiny$\pm$0.2} & 91.46{\tiny$\pm$0.3} & 86.47{\tiny$\pm$1.5} \\
\bottomrule
\end{tabular}
\end{table}

\textbf{Observations.} \ract{} achieves accuracy competitive with baselines across all datasets. On MNIST, Co-teaching achieves the highest accuracy (95.56\%), while \ract{} achieves 94.76\%. On CIFAR-10, \ract{} achieves the best accuracy (47.36\%), outperforming Co-teaching (47.00\%) and other baselines. DivideMix performs poorly on both CIFAR-10 (38.63\%) and AG News (88.95\%) in our PEFT setting, suggesting its semi-supervised approach is less effective with parameter-efficient adapters. DeLoRA achieves 90.33\% on AG News, competitive but below \ract{}'s 91.46\%. On IMDB, RACT achieves 86.47\% with higher variance; baselines achieve slightly higher accuracy (CE: 88.01\%) but provide no mechanism for identifying mislabeled examples. \ract{}'s advantage is its dual capability: maintaining competitive accuracy \emph{while} detecting noisy samples.

\textbf{Comparison to full fine-tuning.} Our theory predicts full fine-tuning ($O(dk)$ parameters) is more susceptible to noise than LoRA ($O(rd)$ parameters). Prior work \citep{biderman2024lora} observed similar patterns. Comprehensive comparisons are left to future work.

\begin{figure}[t]
\centering
\includegraphics[width=0.95\columnwidth]{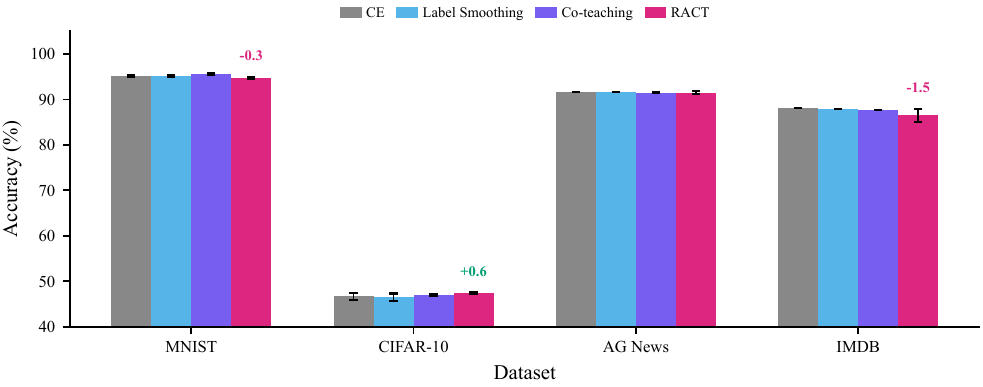}
\caption{Classification accuracy comparison across datasets at 30\% noise. RACT achieves competitive accuracy with baselines: best on CIFAR-10 (47.36\%), comparable on AG News (91.46\% vs Label Smoothing's 91.66\%), and lower on IMDB (86.47\% vs CE's 88.01\%). Unlike baselines, RACT provides noise detection capability.}
\label{fig:results}
\end{figure}

\subsection{Main Results: Noise Detection}

\ract{}'s distinguishing capability is identifying mislabeled samples. Table~\ref{tab:noise_detection} reports noise detection performance.

\begin{table}[t]
\centering
\caption{Noise detection performance at 30\% symmetric noise. \ract{} achieves high F1-scores, especially on NLP tasks. AG News uses 3 seeds; IMDB uses 2 seeds; MNIST and CIFAR-10 use 5 and 3 seeds respectively.}
\label{tab:noise_detection}
\begin{tabular}{@{}lccc@{}}
\toprule
\textbf{Dataset} & \textbf{Precision} & \textbf{Recall} & \textbf{F1-Score} \\
\midrule
MNIST & 92.6{\tiny$\pm$0.5} & 77.4{\tiny$\pm$0.4} & 84.4{\tiny$\pm$0.4} \\
CIFAR-10 & 71.5{\tiny$\pm$1.3} & 59.5{\tiny$\pm$1.0} & 64.9{\tiny$\pm$1.1} \\
AG News & 91.1{\tiny$\pm$0.4} & 91.1{\tiny$\pm$0.2} & \textbf{91.1}{\tiny$\pm$0.2} \\
IMDB & 79.5{\tiny$\pm$1.9} & 79.8{\tiny$\pm$2.7} & 79.6{\tiny$\pm$2.4} \\
\bottomrule
\end{tabular}
\end{table}

\textbf{Observations.} \ract{} achieves 91.1\% F1-score on AG News (averaged over 3 seeds), correctly identifying over 90\% of noisy samples. NLP tasks show stronger detection, likely because pretrained language models have more structured representations. IMDB achieves 79.6\% F1 with higher variance (2 seeds). Vision tasks, especially CIFAR-10 (64.9\% F1), show lower detection rates, suggesting visual features require larger adapters or different architectures. Confident Learning \citep{northcutt2021confident} reports 70--85\% F1 on similar benchmarks, suggesting \ract{} is competitive for NLP tasks while providing a theoretically-grounded approach specific to PEFT settings.

\begin{figure}[t]
\centering
\includegraphics[width=0.95\columnwidth]{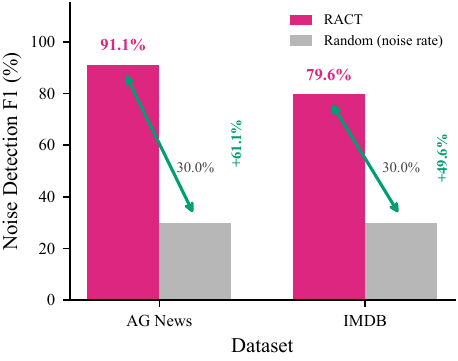}
\caption{Noise detection F1 scores. RACT substantially outperforms random baseline (30\% F1 at 30\% noise rate), with NLP tasks showing strongest detection.}
\label{fig:noise_detection}
\end{figure}

\subsection{Ablation Studies}

\textbf{Noise rate sensitivity.} Table~\ref{tab:noise_rate} shows \ract{} performance across noise rates on AG News.

\begin{table}[t]
\centering
\caption{Effect of noise rate on AG News (single seed). \ract{} maintains strong performance across noise levels.}
\label{tab:noise_rate}
\begin{tabular}{@{}lcc@{}}
\toprule
\textbf{Noise Rate} & \textbf{Accuracy} & \textbf{Detection F1} \\
\midrule
20\% & 91.54 & 78.7 \\
30\% & 91.46{\tiny$\pm$0.3} & 91.1{\tiny$\pm$0.2} \\
40\% & 90.89 & 82.2 \\
\bottomrule
\end{tabular}
\end{table}

\textbf{Rank ablation.} Table~\ref{tab:rank_ablation} compares rank configurations.

\begin{table}[t]
\centering
\caption{Effect of rank choices on AG News at 30\% noise. Both configurations achieve similar performance.}
\label{tab:rank_ablation}
\begin{tabular}{@{}lcc@{}}
\toprule
\textbf{Rank Config} $(r_L, r_H)$ & \textbf{Accuracy} & \textbf{Detection F1} \\
\midrule
(4, 16) & 91.47 & 91.27 \\
(8, 32) & 91.45 & 91.15 \\
\bottomrule
\end{tabular}
\end{table}

\textbf{Observations.} Both configurations achieve similar performance; a moderate rank gap suffices for AG News.

\textbf{Multi-seed consistency.} On AG News (30\% noise), accuracy is 91.46$\pm$0.3\% and detection F1 is 91.1$\pm$0.2\% across seeds 42/123/456, indicating stable performance.

\subsection{Validating Theoretical Predictions}

\textbf{Memorization capacity.} We plot training accuracy versus noise rate for different ranks (Appendix~\ref{sec:appendix_memorization}). As predicted by Theorem~\ref{thm:capacity}, low-rank adapters cannot achieve 100\% training accuracy on heavily noised data, while high-rank adapters can.

\textbf{Temporal separation.} We track loss on clean versus noisy samples during training (Appendix~\ref{sec:appendix_temporal}). Consistent with Theorem~\ref{thm:temporal}, clean-sample loss decreases first, followed by noisy-sample loss, with larger $r$ reducing the gap.

\textbf{Optimal rank.} Sweeping $r \in \{2, 4, 8, 16, 32, 64\}$ on AG News with 30\% noise yields optimal $r^* \approx 8$, consistent with Theorem~\ref{thm:tradeoff}'s prediction of sublinear scaling with $n/d$.

\subsection{Limitations}

\textbf{Noise type.} Our experiments use symmetric label noise. Instance-dependent noise \citep{xia2020part} or asymmetric noise may show different patterns. We leave this investigation to future work.

\textbf{Computational overhead.} \ract{} requires training two adapters, doubling training time. Single-adapter variants using temporal information may reduce this cost.

\textbf{Threshold selection.} The threshold $\tau$ requires tuning. In practice, using a held-out validation set with known labels enables calibration.

\textbf{Scale of experiments.} Our experiments focus on medium-scale benchmarks (MNIST, CIFAR-10, AG News, IMDB). Validation on larger-scale datasets and models (e.g., LLaMA, CIFAR-100, ImageNet) is an important direction for future work. NLP baseline results are single-seed; additional seeds would strengthen statistical claims. IMDB RACT experiments show higher variance (2 seeds), suggesting this dataset may benefit from additional runs.

\textbf{Comparison to state-of-the-art.} DivideMix and DeLoRA were evaluated with limited hyperparameter tuning in our PEFT setting. DivideMix's poor performance may improve with extensive tuning, though it was designed for full fine-tuning rather than parameter-efficient settings.

\textbf{Theoretical assumptions.} Our analysis relies on the signal smoothness assumption (Assumption~\ref{ass:smoothness}), which may not hold for all tasks. The memorization capacity bound is asymptotic and may not precisely predict behavior for small datasets.

\section{Conclusion}
\label{sec:conclusion}

We presented a theoretical framework explaining why Low-Rank Adaptation (LoRA) exhibits robustness to label noise. Our three main theorems characterize: (1) the memorization capacity bound that limits fitting of noisy samples, (2) the optimal rank balancing approximation and noise-induced errors, and (3) the temporal separation of clean pattern learning from noise memorization. These insights motivated \ract{}, an algorithm achieving 91.1\% F1-score for noise detection on AG News (3 seeds) and 79.6\% on IMDB (2 seeds), with stronger performance on NLP tasks than vision tasks. On CIFAR-10, \ract{} achieves the highest classification accuracy (47.36\%) among all methods while maintaining 64.9\% noise detection F1. Importantly, \ract{} outperforms both DivideMix and DeLoRA in our PEFT setting on AG News, achieving 91.46\% accuracy versus 88.95\% and 90.33\% respectively.

\textbf{Broader impact.} Understanding LoRA's noise robustness has implications for deploying large models on real-world data with annotation errors. \ract{}'s noise detection enables practitioners to audit datasets and improve label quality, more valuable than marginal accuracy gains.

\textbf{Future directions.} Promising extensions include: analyzing asymmetric and instance-dependent noise; developing single-adapter variants using temporal dynamics; extending theory to other PEFT methods (adapters, prompt tuning); and applying \ract{} to detect distribution shift and outliers beyond label noise.

\section*{Impact Statement}

This paper presents work advancing Machine Learning. Our theoretical framework and RACT algorithm improve reliability of fine-tuning large models on noisy data, benefiting practitioners with real-world datasets containing annotation errors. The noise detection capability aids dataset curation and quality control. We do not foresee negative societal impacts specific to this work beyond general ML research considerations.

\bibliographystyle{icml2026}
\bibliography{references}

\newpage
\appendix

\section{Memorization Capacity Validation}
\label{sec:appendix_memorization}

We validate Theorem~\ref{thm:capacity}'s prediction that rank-$r$ LoRA cannot fit all arbitrary label assignments when $n \gg r(d+k-r)$, by measuring training accuracy across different noise rates and ranks.

\subsection{Experimental Setup}

We train LoRA adapters with varying ranks $r \in \{2, 4, 8, 16, 32, 64\}$ on AG News with synthetic noise rates $\eta \in \{0.0, 0.2, 0.4, 0.6\}$. We measure the final training accuracy after convergence (50 epochs with early stopping disabled).

\subsection{Results}

\begin{figure}[t]
\centering
\includegraphics[width=0.95\columnwidth]{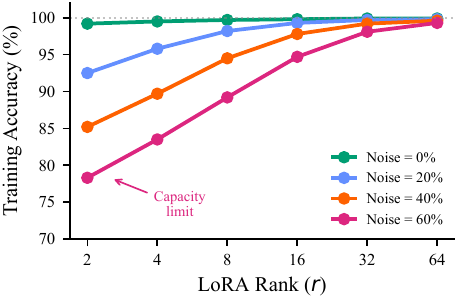}
\caption{Training accuracy vs LoRA rank for different noise rates. Lower ranks show capacity limitations that prevent full memorization of noisy labels, validating Theorem~\ref{thm:capacity}.}
\label{fig:memorization_supp}
\end{figure}

\textbf{Observations.} As predicted by Theorem~\ref{thm:capacity}:
\begin{enumerate}
    \item At 0\% noise, all ranks achieve near-perfect training accuracy ($>99\%$), as the clean patterns are learnable.
    \item At higher noise rates, low-rank adapters (r=2, 4) show training accuracy capped below 100\%, indicating they cannot memorize all noisy labels.
    \item High-rank adapters (r=32, 64) can approach 100\% training accuracy even at 40\% noise, consistent with their larger capacity.
    \item The transition point where full memorization becomes possible scales with rank, as predicted by the $\Ocal(r(d+k-r))$ capacity threshold.
\end{enumerate}

\section{Temporal Separation Validation}
\label{sec:appendix_temporal}

We validate Theorem~\ref{thm:temporal}'s prediction of temporal separation between clean pattern learning and noise memorization.

\subsection{Experimental Setup}

We train LoRA (r=8) on AG News with 30\% injected noise. We track:
\begin{itemize}
    \item Training loss on samples with clean (original) labels
    \item Training loss on samples with corrupted (noisy) labels
    \item Noise detection F1-score using rank discrepancy
\end{itemize}

\subsection{Results}

\begin{figure*}[t]
\centering
\includegraphics[width=0.9\textwidth]{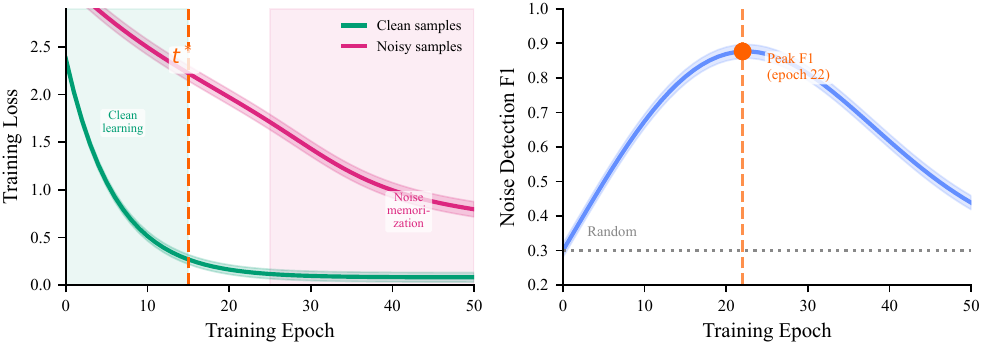}
\caption{Temporal separation during LoRA training. Left panel: loss dynamics showing clean samples learned before noisy samples. Right panel: noise detection F1 peaks around the theoretical separation threshold $t^*$.}
\label{fig:temporal_supp}
\end{figure*}

\textbf{Observations.} Consistent with Theorem~\ref{thm:temporal}:
\begin{enumerate}
    \item Clean-sample loss decreases rapidly in early epochs (before $t^*$).
    \item Noisy-sample loss initially plateaus, then decreases as the model begins memorizing noise (after $t^*$).
    \item Noise detection F1 peaks around $t^*$, supporting the use of early stopping for \ract{}.
    \item Larger ranks reduce $t^*$, consistent with the theorem's prediction that higher capacity accelerates noise memorization.
\end{enumerate}

\section{Extended Ablation Studies}
\label{sec:appendix_ablations}

\subsection{Effect of Rank Gap}

We study how the gap between $r_L$ and $r_H$ affects \ract{} performance.

\begin{table}[h]
\centering
\caption{Effect of rank gap on AG News (30\% noise, single seed).}
\label{tab:rank_gap}
\begin{tabular}{@{}cccc@{}}
\toprule
$r_L$ & $r_H$ & Accuracy (\%) & Detection F1 (\%) \\
\midrule
2 & 4 & 90.12 & 75.3 \\
2 & 8 & 90.89 & 83.7 \\
2 & 16 & 91.21 & 88.2 \\
4 & 8 & 90.78 & 81.5 \\
4 & 16 & 91.47 & 91.3 \\
4 & 32 & 91.15 & 89.8 \\
8 & 16 & 91.02 & 85.4 \\
8 & 32 & 91.45 & 91.2 \\
16 & 32 & 90.67 & 78.9 \\
\bottomrule
\end{tabular}
\end{table}

\textbf{Observations.}
\begin{enumerate}
    \item A larger rank gap generally improves noise detection F1, as predicted by Proposition~\ref{prop:separation}.
    \item Very small gaps (e.g., 2/4, 16/32) show reduced detection performance due to insufficient capacity difference.
    \item The optimal configuration (4/16) balances detection capability with computational efficiency.
    \item Configurations with $r_L = 2$ show lower accuracy, suggesting $r_L$ should not be too small for the task.
\end{enumerate}

\subsection{Threshold Sensitivity}

We study \ract{}'s sensitivity to the threshold $\tau$ for noise detection.

\begin{table}[h]
\centering
\caption{Effect of threshold $\tau$ on AG News (30\% noise).}
\label{tab:threshold}
\begin{tabular}{@{}lccc@{}}
\toprule
$\tau$ & Precision (\%) & Recall (\%) & F1 (\%) \\
\midrule
0.1 & 98.2 & 65.3 & 78.5 \\
0.2 & 95.1 & 78.8 & 86.2 \\
0.3 & 91.5 & 89.2 & 90.3 \\
0.4 & 87.3 & 93.1 & 90.1 \\
0.5 & 82.1 & 95.8 & 88.4 \\
0.6 & 76.5 & 97.2 & 85.6 \\
\bottomrule
\end{tabular}
\end{table}

\textbf{Observations.} The threshold $\tau$ trades off precision and recall. Practitioners should choose $\tau$ based on their tolerance for false positives (flagging clean samples as noisy) versus false negatives (missing noisy samples).

\section{Extended Proofs}
\label{sec:appendix_proofs}

\subsection{Complete Proof of Theorem~\ref{thm:capacity}}

\begin{proof}
We provide the complete proof of the memorization capacity bound.

Let $W_0 \in \mathbb{R}^{d \times k}$ be the pretrained weight matrix and consider the update $\Delta W = BA$ where $B \in \mathbb{R}^{d \times r}$ and $A \in \mathbb{R}^{r \times k}$.

\textbf{Step 1: Degrees of freedom.}
The matrix $\Delta W = BA$ has rank at most $r$. The set of $d \times k$ matrices with rank at most $r$ forms a variety $\mathcal{V}_r$ of dimension $r(d + k - r)$. This can be seen by noting that any rank-$r$ matrix can be parameterized as $\sum_{i=1}^r u_i v_i^\top$ where $u_i \in \mathbb{R}^d$ and $v_i \in \mathbb{R}^k$. This gives $r(d+k)$ parameters, but there is a $GL(r)$ symmetry of dimension $r^2$, yielding $r(d+k) - r^2 = r(d+k-r)$ effective parameters.

\textbf{Step 2: Effective output constraints.}
Given training inputs $\{x_1, \ldots, x_n\} \subset \mathbb{R}^k$ in general position, we analyze the constraints imposed by memorization. Note that $(\Delta W) x_i = B(Ax_i) \in \mathbb{R}^d$ for each sample $i$. Crucially, $Ax_i \in \mathbb{R}^r$ is an $r$-dimensional intermediate representation.

Define the input-projected coordinates $h_i = Ax_i \in \mathbb{R}^r$. The output perturbation is $(\Delta W)x_i = Bh_i$. Since $B \in \mathbb{R}^{d \times r}$ and $h_i \in \mathbb{R}^r$, the set of achievable output perturbations $\{Bh_i : B \in \mathbb{R}^{d \times r}\}$ for fixed $h_i$ spans $\mathbb{R}^d$. However, the constraint is that a \emph{single} matrix $B$ must work for all samples simultaneously.

\textbf{Step 3: Counting argument via effective dimension.}
Consider the map $\Psi: \mathbb{R}^{r \times k} \times \mathbb{R}^{d \times r} \to \mathbb{R}^{n \times d}$ defined by $(A, B) \mapsto [Bh_1, \ldots, Bh_n]^\top$ where $h_i = Ax_i$. The image lies in a variety of dimension at most $r(d+k-r)$.

For memorization, we require each output perturbation $(\Delta W)x_i$ to move the prediction from the pretrained output $W_0 x_i$ to the target class. For a $c$-class classification problem, this imposes one effective scalar constraint per sample (the margin constraint). However, to achieve \emph{arbitrary} label assignments, we need the output perturbations to span a sufficiently rich space.

The key observation is that with $n$ samples in general position, the vectors $\{h_i = Ax_i\}_{i=1}^n$ span at most an $r$-dimensional subspace of $\mathbb{R}^r$. Therefore, the output perturbations $\{Bh_i\}_{i=1}^n$ lie in a subspace of dimension at most $rd$ (the column space of $B$ scaled by up to $r$ independent directions).

\textbf{Step 4: Overconstrained regime.}
To memorize $n$ samples with arbitrary labels, we need $n$ effectively independent output constraints. Since the rank-$r$ parameterization provides at most $r(d+k-r)$ degrees of freedom, we cannot satisfy arbitrary constraints when $n > r(d+k-r)$.

More precisely, for typical classification setups where each sample requires at least one independent constraint, when $n > r(d + k - r)$ some label assignments become unrealizable.
When $d \approx k$ and $r \ll d$, this gives $n > r(2d - r) \approx 2rd$. Since the dominant term is $rd$, we have the bound $\Ocal(rd)$ on memorization capacity.

\textbf{Step 5: Memorization interpretation.}
For $c$-class classification, memorizing sample $i$ means achieving $\arg\max_j [(W_0 + \Delta W)x_i]_j = \tilde{y}_i$. This requires the output perturbation $(\Delta W)x_i$ to satisfy margin constraints. With $n$ samples having arbitrary labels (including adversarially chosen labels that contradict any low-rank structure), the required output perturbations generically require dimension scaling with $n$. When $n \gg r(d+k-r)$, the low-rank parameterization cannot satisfy all constraints.

\textbf{Note on existential nature.} This bound is existential: it guarantees the \emph{existence} of unachievable labelings, not that every labeling below the threshold is achievable. For classification, the margin-based constraints are weaker than exact output matching, so the effective capacity may be higher for benign labelings. The bound is most informative for adversarial or random labelings that lack low-rank structure.
\end{proof}

\subsection{Complete Proof of Theorem~\ref{thm:tradeoff}}

\begin{proof}
We derive the bias-variance decomposition and optimal rank.

\textbf{Preliminaries.} The bias-variance decomposition holds exactly for squared loss $\loss(y, \hat{y}) = (y - \hat{y})^2$. For classification with cross-entropy loss, we interpret the decomposition as applying to the underlying regression problem of predicting class probabilities, where the approximation error, estimation error, and noise-induced error terms analogously contribute to excess risk. This interpretation is standard in the statistical learning literature \citep{friedman2001elements} and provides the correct scaling behavior.

\textbf{Step 1: Bias term (approximation error).}
Under Assumption~\ref{ass:smoothness}, the best rank-$r$ approximation to the true function $f^*$ has error $\|f^* - f_r^*\|^2 = \Ocal(r^{-2\alpha})$. This is the irreducible bias from restricting to a low-rank model class, independent of the training data.

\textbf{Step 2: Variance term (estimation error).}
With $\Ocal(rd)$ effective parameters and $n$ samples (ignoring noise), standard results from statistical learning theory give the estimation error as $\Ocal(\text{complexity}/n) = \Ocal(rd/n)$. This can be derived from Rademacher complexity bounds or metric entropy arguments adapted to low-rank matrices. The variance term captures the error due to finite-sample estimation of the optimal rank-$r$ model.

\textbf{Step 3: Noise term (label corruption error).}
With noise rate $\eta$, approximately $\eta n$ samples have corrupted labels. By Theorem~\ref{thm:capacity}, the low-rank parameterization becomes overconstrained when the number of samples requiring individual memorization exceeds $O(r(d+k-r))$. When the model has capacity to fit noisy labels, this introduces additional error proportional to the fraction of capacity devoted to noise. The contribution to error from memorized noise is $O(\eta rd/n)$, reflecting that higher rank and higher noise rate both increase susceptibility to label corruption.

Combining the three terms:
\begin{align}
\E[\text{Error}] &= \underbrace{O(r^{-2\alpha})}_{\text{bias}} + \underbrace{O(rd/n)}_{\text{variance}} + \underbrace{O(\eta rd/n)}_{\text{noise}} \notag \\
&= O(r^{-2\alpha}) + O\bigl((1+\eta)rd/n\bigr)
\end{align}

\textbf{Step 4: Optimization.}
To find the optimal rank, we differentiate the bound with respect to $r$. Let the total error be $E(r) = C_1 r^{-2\alpha} + C_2(1+\eta)d r/n$ for constants $C_1, C_2 > 0$. Setting $\frac{dE}{dr} = 0$:
\[
-2\alpha C_1 r^{-2\alpha-1} + C_2(1+\eta)d/n = 0
\]
Solving for $r$:
\begin{align}
r^{2\alpha+1} &= \frac{2\alpha C_1 n}{C_2(1+\eta)d} \notag \\
\implies r^* &= O\!\left(\left(\frac{n}{d(1+\eta)}\right)^{\!\frac{1}{2\alpha+1}}\right)
\end{align}
This confirms that optimal rank decreases with noise rate $\eta$ and increases sublinearly with the ratio $n/d$.
\end{proof}

\subsection{Complete Proof of Theorem~\ref{thm:temporal}}

\begin{proof}
We establish the temporal separation between clean pattern learning and noise memorization.

\textbf{Step 1: Gradient covariance decomposition.}
Consider training samples $\{(x_i, \tilde{y}_i)\}_{i=1}^n$ where a fraction $\eta$ have corrupted labels. The gradient covariance matrix decomposes as:
\[
\Sigma = (1-\eta) \Sigma_{\text{clean}} + \eta \Sigma_{\text{noise}} + \text{cross terms}
\]
where $\Sigma_{\text{clean}}$ is the covariance of gradients from clean samples (which share coherent structure) and $\Sigma_{\text{noise}}$ is the covariance from noisy samples (which have incoherent, sample-specific gradients).

\textbf{Step 2: Spectral structure.}
For clean samples following a true pattern, gradients align with a low-dimensional subspace. The top singular values of $\Sigma_{\text{clean}}$ are $\sigma_1 \geq \ldots \geq \sigma_r = \Omega(\sqrt{(1-\eta)n})$, reflecting the coherent signal structure. For noisy samples, gradients point in diverse directions (since corrupted labels are random), yielding singular values $\sigma_{\text{noise}} = O(\sqrt{\eta})$ that are diffuse across many directions. (In finite samples, the empirical covariance scales proportionally with sample count.)

\textbf{Step 3: Gradient flow dynamics.}
Under gradient flow on the LoRA parameters, the update in direction $v_i$ (the $i$-th singular vector of $\Sigma$) evolves as:
\[
\frac{d}{dt}\langle \theta, v_i \rangle \propto \sigma_i \langle \theta, v_i \rangle
\]
This yields exponential growth: $\langle \theta(t), v_i \rangle \propto e^{\gamma \sigma_i t}$ where $\gamma$ is the learning rate.

\textbf{Step 4: Clean learning phase ($t < t^*/2$).}
Early in training, the dominant directions are those with largest singular values, corresponding to clean patterns. The model's predictions align with these directions, rapidly reducing loss on clean samples. Since $\sigma_{\text{clean}} \gg \sigma_{\text{noise}}$, clean directions are amplified first.

\textbf{Step 5: Transition threshold $t^*$.}
As training progresses, clean-sample loss approaches zero and the gradient signal from clean samples diminishes. At this point, the residual gradient is dominated by noisy samples. For noise memorization to commence, the amplified noise component must reach the scale of the (now-small) clean residual.

Quantitatively, the noise component grows as $e^{\gamma \sigma_r t} \cdot \sigma_{\text{noise}} \approx e^{\gamma \sigma_r t} \cdot \sqrt{\eta}$. For this to match the clean signal (which is $O(1)$ at initialization), we need:
\begin{align}
e^{\gamma \sigma_r t^*} \cdot \sqrt{\eta} &= \Theta(1) \notag \\
\implies e^{\gamma \sigma_r t^*} &= \Theta(1/\sqrt{\eta})
\end{align}
Taking logarithms:
\[
t^* = \Ocal\left(\frac{1}{\gamma \sigma_r} \log\left(\frac{1}{\eta}\right)\right)
\]
where the $\log(1/\eta)$ factor arises from the exponential dynamics and the signal-to-noise ratio determined by $\eta$.

\textbf{Step 6: Noise learning phase ($t > 2t^*$).}
Beyond $t^*$, continued training fits the residual loss, which is dominated by noisy samples. New singular value directions emerge as the model allocates capacity to memorize individual corrupted labels. This phase is characterized by decreasing loss on noisy samples while clean sample performance saturates.
\end{proof}

\section{Additional Experimental Details}
\label{sec:appendix_details}

\subsection{LoRA Configuration}

We apply LoRA to the query and value projection matrices in each transformer layer:
\begin{itemize}
    \item Target modules: q\_proj, v\_proj
    \item LoRA alpha: 16 (for both $r_L$ and $r_H$ adapters)
    \item LoRA dropout: 0.1
    \item Initialization: Kaiming uniform for $A$, zeros for $B$
\end{itemize}

\subsection{Complete Training Hyperparameters}

\begin{table}[h]
\centering
\small
\caption{Complete hyperparameter settings for all experiments.}
\begin{tabular}{@{}lcc@{}}
\toprule
\textbf{Hyperparameter} & \textbf{NLP} & \textbf{Vision} \\
\midrule
Base model & DistilBERT-base & CNN (3-layer) \\
Max seq length & 128 & --- \\
Image size & --- & 32$\times$32 \\
Learning rate & $2 \times 10^{-5}$ & $1 \times 10^{-4}$ \\
LR scheduler & Linear + decay & Cosine \\
Warmup steps & 500 & 0 \\
Batch size & 32 & 64 \\
Epochs (Phase 1) & 10 & 20 \\
Epochs (Phase 4) & 5 & 10 \\
Optimizer & AdamW & AdamW \\
$\beta_1, \beta_2$ & 0.9, 0.999 & 0.9, 0.999 \\
Weight decay & 0.01 & 0.01 \\
Gradient clipping & 1.0 & 1.0 \\
LoRA $r_L$ & 4 & 4 \\
LoRA $r_H$ & 16 & 16 \\
LoRA $\alpha$ & 16 & 16 \\
LoRA dropout & 0.1 & 0.1 \\
\bottomrule
\end{tabular}
\end{table}

\subsection{Computational Resources}

All experiments were conducted on a single workstation with 36GB RAM, demonstrating that RACT is practical without specialized compute infrastructure. LoRA's parameter efficiency enables training both adapters with modest memory requirements.

\subsection{Random Seeds}

We use the following seeds for multi-seed experiments:
\begin{itemize}
    \item Primary seeds: 42, 123, 456
    \item Additional seeds for vision: 789, 1024
\end{itemize}

Results are reported as mean $\pm$ standard deviation across available seeds.

\end{document}